\documentclass{article}





\usepackage{tikz}
\usetikzlibrary{shapes.geometric}
\usetikzlibrary{positioning,automata,arrows}
\usepackage{pifont}

\usepackage{fontawesome,wasysym,marvosym}

\definecolor{UofTBlue}{RGB}{0,47,101}



\usepackage[final, nonatbib]{neurips_2020}
\usepackage[utf8]{inputenc} 
\usepackage[T1]{fontenc}    
\usepackage{hyperref}       
\usepackage{url}            
\usepackage{booktabs}       
\usepackage{amsfonts}       
\usepackage{nicefrac}       
\usepackage{microtype}      
\usepackage[ruled,linesnumbered]{algorithm2e}
\usepackage{amsthm}
\usepackage{amssymb}
\usepackage{color}
\usepackage{tikz}
\usepackage[font=small,labelfont=bf]{caption}
\newtheorem{theorem}{Theorem}
\newcommand{\entry}[2]{\raisebox{2pt}{\tikz{\draw[#1,line width=1.7pt] (0,0) -- (0.6,0);}} #2}

\title{Learning Symbolic Representations for Reinforcement Learning of Non-Markovian Behavior}

%

\author{%
Phillip J.K. Christoffersen, ~~Andrew C. Li, ~~Rodrigo Toro Icarte, ~~Sheila A. McIlraith \\
  University of Toronto \& Vector Institute for Artificial Intelligence, Toronto, Canada \\
  \texttt{\{phill,andrewli,rntoro,sheila\}@cs.toronto.edu}
  %
  %
}

\begin{document}

\maketitle

\begin{abstract}

    Many real-world reinforcement learning (RL) problems necessitate learning complex, temporally extended behavior that may only receive reward signal when the behavior is completed. If the reward-worthy behavior is known, it can be specified in terms of a non-Markovian reward function---a function that depends on aspects of the state-action history, rather than just the current state and action. Such reward functions yield sparse rewards, necessitating an inordinate number of experiences to find a policy that captures the reward-worthy pattern of behavior. 
    Recent work has leveraged Knowledge Representation (KR) to provide a symbolic abstraction of aspects of the state that summarize reward-relevant properties of the state-action history and support learning a Markovian decomposition of the problem in terms of an automaton over the KR. Providing such a decomposition has been shown to vastly improve learning rates, especially when coupled with algorithms that exploit automaton structure. Nevertheless, such techniques rely on \emph{a priori} knowledge of the KR. In this work, we explore how to automatically \emph{discover} useful state abstractions that support learning automata over the state-action history. The result is 
    an end-to-end algorithm that can learn optimal policies with significantly fewer environment samples than state-of-the-art RL on simple non-Markovian domains. 

\end{abstract}


\section{Introduction}

Deep RL has shown promise at learning complex behavior in many settings, including game-playing \cite{silver2018general}, robotics \cite{peters2003reinforcement}, and control systems \cite{crites1996improving}. These algorithms typically take advantage of a Markov assumption --- that it is enough to consider only the current state when deciding which action to take. However, many real-world tasks are inherently temporally extended. The pattern of behavior the RL agent must learn depends not only upon the current state but also on past states and actions. 
For example, an agent that needs to get through a locked door to get high reward must have previously located and acquired the key. Unfortunately, learning such temporally extended behavior can be incredibly challenging since the agent must learn to discern relevant features from its state-action history; these can be arbitrarily far removed from the present state, and may depend on this history in complex ways and without intermediate reward signal to aid learning. The standard deep RL solution to learning such temporally extended behavior is to use a recurrent neural network (RNN), which learns an abstract hidden state in order to summarize environment histories, but RNNs require much data to train, and are difficult to tune.


By contrast, in recent work, an algorithm for learning temporally extended behavior is proposed, where the RNN hidden states are replaced with an augmentation of the current state in terms of hand-designed propositional symbols, which incorporate domain knowledge and point the agent towards potentially reward-relevant properties of the state-action history (e.g. \cite{toroicarteWKVCM2019learning,camacho2019ltl,gaon2020reinforcement,icml2018rms}). In the previous example, one can augment the agent with the propositional symbol \textbf{have\_keys}, indicating whether the agent has acquired the keys to the door in the past. Markovian policies on this state space now become vastly more expressive: if one can additionally condition on the truth state of \textbf{have\_keys} when deciding an action, we can perform the following temporally extended behavior: go towards the keys when \textbf{have\_keys} is false, then go towards the door when \textbf{have\_keys} is true. But how did we know to augment the agent with \textbf{have\_keys} in the first place? While this example seems simple, this is only because we contrived the reward: it is not in general clear which propositional symbols to augment an agent with, in order to achieve high performance. To address this, we propose the use of automata learning within the RL framework to \emph{automatically} yield such propositional symbols, rather than relying on domain knowledge. We demonstrate that the trained automata dramatically accelerate policy learning, with our end-to-end approach outperforming a state-of-the-art RL algorithm (Recurrent-PPO) on several non-Markovian reward domains.

\section{Related Work}

Recently, the idea of specifying non-Markovian reward functions in RL via formal languages such as Linear Temporal Logic (LTL) 
(e.g., ( \cite{lacerda2014optimal,littman2017environment,camacho2017decision,li2018policy,toro2018teaching,brafman2018ltlf,camacho2019ltl,hasanbeig2019reinforcement}) or automata (e.g., \cite{icml2018rms,toroicarteWKVCM2019learning, brafman2019regular,hasanbeig2019logic}) has garnered significant attention. 
While these approaches rely on domain knowledge and a domain-specific vocabulary for specification of the reward function, we consider a black-box non-Markovian reward and present an automated approach to uncover the reward structure. In this approach, we train automata offline using a reward-prediction heuristic, and augment the environment states with the states of the learned automaton, as opposed to hand-designed features. An alternate black-box approach to ours is to first train an RNN, with a standard deep (recurrent) RL algorithm, and then "quantize" the hidden state of the RNN, but this learned transition model is not a direct function of the state-action history \cite{koul2019representations}.

Previous work by Toro Icarte et al. \cite{toroicarteWKVCM2019learning} and Xu et al. \cite{xu2020joint} share many of the motivations of our work but perform poorly in noisy environments. The work most similar to ours is by Gaon \& Brafman \cite{gaon2020reinforcement}, which we build on in several key ways. First, the (off-the-shelf) automata-learning approaches they employ are sensitive to noisy data and often learn large, sample-sensitive automata even when the reward structure is simple. We make use of recent advances in automata-learning which are robust to noise and regularize the size of the automaton, which we demonstrate in Section~\ref{sec:exp}. Furthermore, \cite{gaon2020reinforcement} lacked experimental comparisons against state-of-the-art RL. In our experiments with non-Markovian goals, we outperform state-of-the-art RL based on RNNs.

\section{Preliminaries}

An MDP is a tuple $\mathcal{M} = (S, A, P, R, \gamma)$, where $S$ is a set of states, $A$ a set of actions, $P: S \times A \times S \to [0, 1]$ the state-action transition function, $R: S \to \mathbb{R}$ the reward, and $0 \le \gamma \le 1$ the discounting factor \cite{sutton2018reinforcement}. In such a setup, the reward $R$ is considered Markovian, due to its dependence only on the most recent state. We will consider the following extension of the MDP: an NMRDP \cite{bacchus1996rewarding} (non-Markovian Reward Decision Process) $\mathcal{N} = (S, A, P, R, \gamma)$ is as before, but where the reward $R: \mathcal{H} \to \mathbb{R}$, where $\mathcal{H} = (S \times A)^*$ is the set of finite histories with states $S$ and actions $A$: in other words, the agent can be rewarded for behavior which is arbitrarily far removed from its current experience. Further, we define a proposition as a function $P: \mathcal{H} \to \{\mathrm{True}, \mathrm{False}\}$, in an NMRDP. Intuitively, propositions correspond to facts about the state-action history in a given episode, such as "the agent has at some point reached the top right corner" or "within the 3 most recent timesteps, the agent took action x". While the number of possible propositions grows double-exponentially in the length of the episode, domain knowledge is often used to specify a relevant set of such propositions, under which the reward is Markovian. The RL domains we consider have non-Markovian goals, i.e. there exists $\mathcal{G} \subset \mathcal{H}$ where $R(g) = 1$ for $g \in \mathcal{G}$, and $R(h) = 0$ for $h \in \mathcal{H} - \mathcal{G}$. Intuitively, we want to create a policy which makes the agent attain a goal history as soon as possible. 

\section{Algorithm}

We use the algorithm described in Algorithm~\ref{AutRL} named AutRL. Following each period of Markovian learning, the sampled traces are used to train (offline) a deterministic finite automaton (DFA) $M$ to predict whether a given sequence achieves reward $0$ or $1$. We leverage the DFA-learning approach from \cite{shvo2020interpretable} due to its efficiency, its propensity to learn small DFAs with few transitions, and its robustness to noise. Tabular $Q$-learning is used for \texttt{markov\_learn}. Intuitively, a DFA with state space $Q$ that accurately discriminates reward $1$ traces from reward $0$ traces (which we define as \emph{consistent}) must model all parts of the state-action history relevant to the goal, and therefore the augmented state space $S \times Q$ must make the problem Markovian. We remark that the resultant DFAs are functions $\mathcal{H} \rightarrow Q$ and are learned end-to-end without domain knowledge. Examples of this can be seen in Section~\ref{sec:exp}. We also provide a convergence guarantee for this algorithm in Appendix A.


\begin{algorithm}[t]
    dfa $\leftarrow $ empty\_automata\;
    $\pi$ $\leftarrow$ uniform\_random\_policy\;
    traces $\leftarrow \emptyset$ \; 
     \While{true}{
      sample\_traces $\leftarrow$ sample($\pi$, N) \;
      append traces with sample\_traces\;
      \If{\emph{sample\_traces} inconsistent with \emph{dfa}}{
        dfa $\leftarrow$ aut\_learn(traces)\;
      }
      $\pi$ $\leftarrow$ markov\_learn(sample\_traces $\times$ dfa)\;
     }
 \caption{AutRL}
 \label{AutRL}
\end{algorithm}


We note that our implementation using $Q$-learning converges to an optimal policy as the number of environment samples approaches infinity (assuming $\mathcal{G}$ is regular, as above) due to the optimal convergence guarantees of $Q$-learning on MDPs \cite{sutton2018reinforcement}). Note that while we search for consistent DFAs, this condition is not necessary to make the learning problem Markovian. For this reason, we relax the inconsistency condition analyzed above, replacing the DFA only under weak performance (i.e. low average reward) at the end of a given epoch of \texttt{markov\_learn}.





\section{Experimental Results}
\label{sec:exp}


The purpose of our experiments was to evaluate our AutRL algorithm, which leverages a learned symbolic representation, relative to a state-of-the-art RNN-based deep RL algorithm. The two metrics were the quality of the policies in terms of maximizing reward, and their sample efficiency. We tested on four non-Markovian domains, similar to those used in the experiments of \cite{gaon2020reinforcement} as follows. 


\textbf{Multi-Armed Bandit:} A single-state environment with two actions ($\mathrm{left}, \mathrm{right}$) and episodes of length $6$. A reward of 1 is obtained only if the 6 actions performed are precisely $\mathrm{left}, \mathrm{right}, \mathrm{right}, \mathrm{left}, \mathrm{right}, \mathrm{left}$ in that order. 

\textbf{Hallway:} A $1 \times 10$ grid, aligned left-to-right, with actions $\mathrm{left}, \mathrm{right}$, and with episodes of length 30. The agent spawns at a random location on the left half of the grid and must first reach the right-most square, and then reach the left-most square to obtain a reward of 1. Before reaching the right-most square, the optimal action in every state is $\mathrm{right}$, but after reaching the right-most square, the optimal action becomes $\mathrm{left}$ in every state. 

\textbf{Gridworld:} A $5 \times 5$ gridworld (with $x,y$-coordinates from $0$ to $4$) with an episode length of 20 and actions $\mathrm{up}, \mathrm{down}, \mathrm{left}, \mathrm{right}$. The agent must first reach $(4,0)$ and then the opposite corner $(0,4)$ to collect a reward of $1$. We also tested a noisy version of this environment where the reward was randomly withheld in $10\%$ of successful traces, and actions had a random outcome $10\%$ of the time. 

We compared AutRL against Recurrent-PPO (\cite{schulman2017proximal}) (with an LSTM policy) using the OpenAI Baselines implementation \cite{baselines}. As shown in all of the below environments, AutRL converges to high reward policies in fewer environment samples than Recurrent-PPO does. Notice that AutRL, despite being automata-based, outperforms Recurrent-PPO even on the Stochastic Gridworld, showing that AutRL is robust to noisy environments. 
In Figure \ref{fig:dfa_1} are two such examples of learned automata on two different runs of the Hallway environment.  

\begin{figure}[h!]
    \centering
     \includegraphics[scale=0.4]{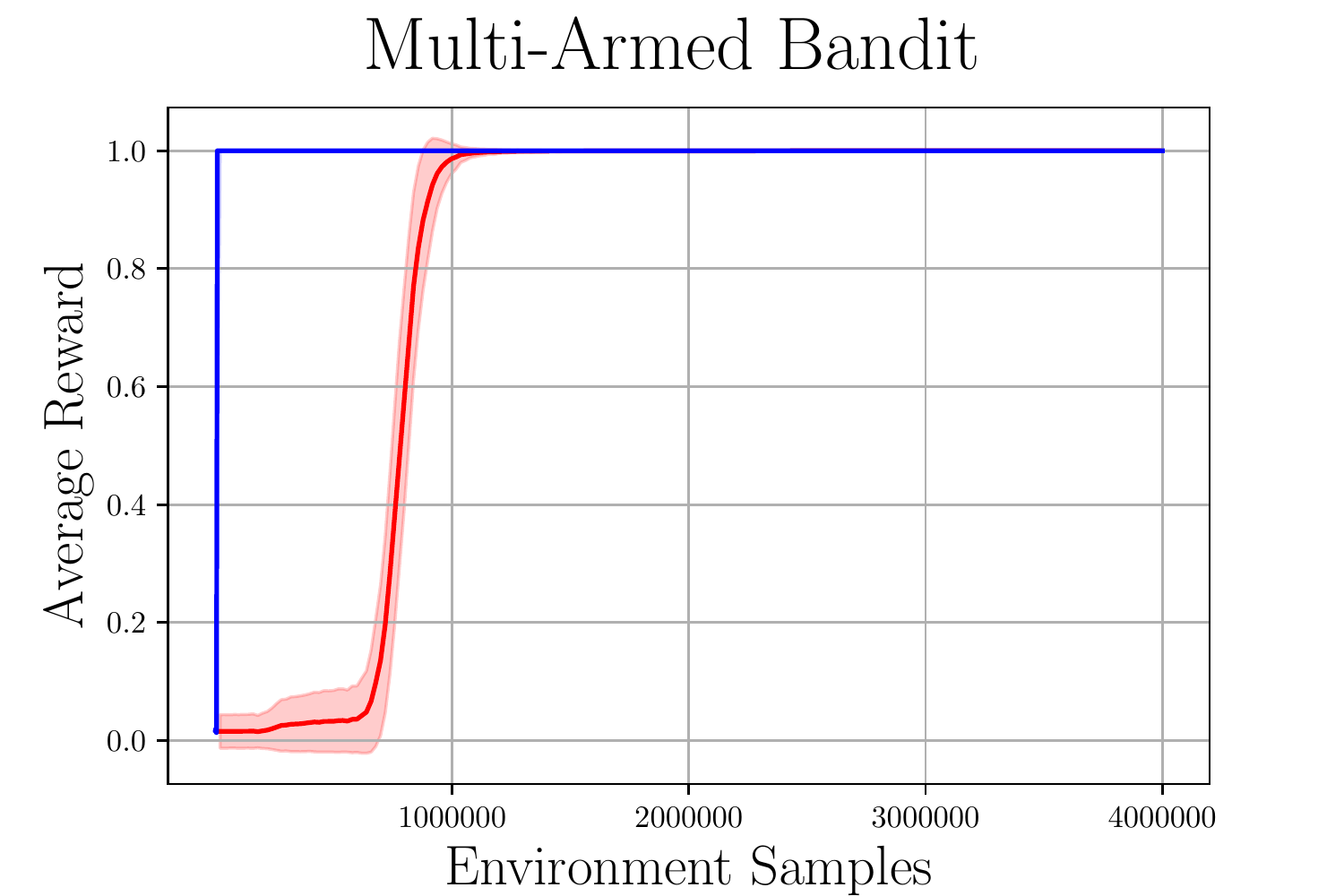} \hspace{1mm} \includegraphics[scale=0.4]{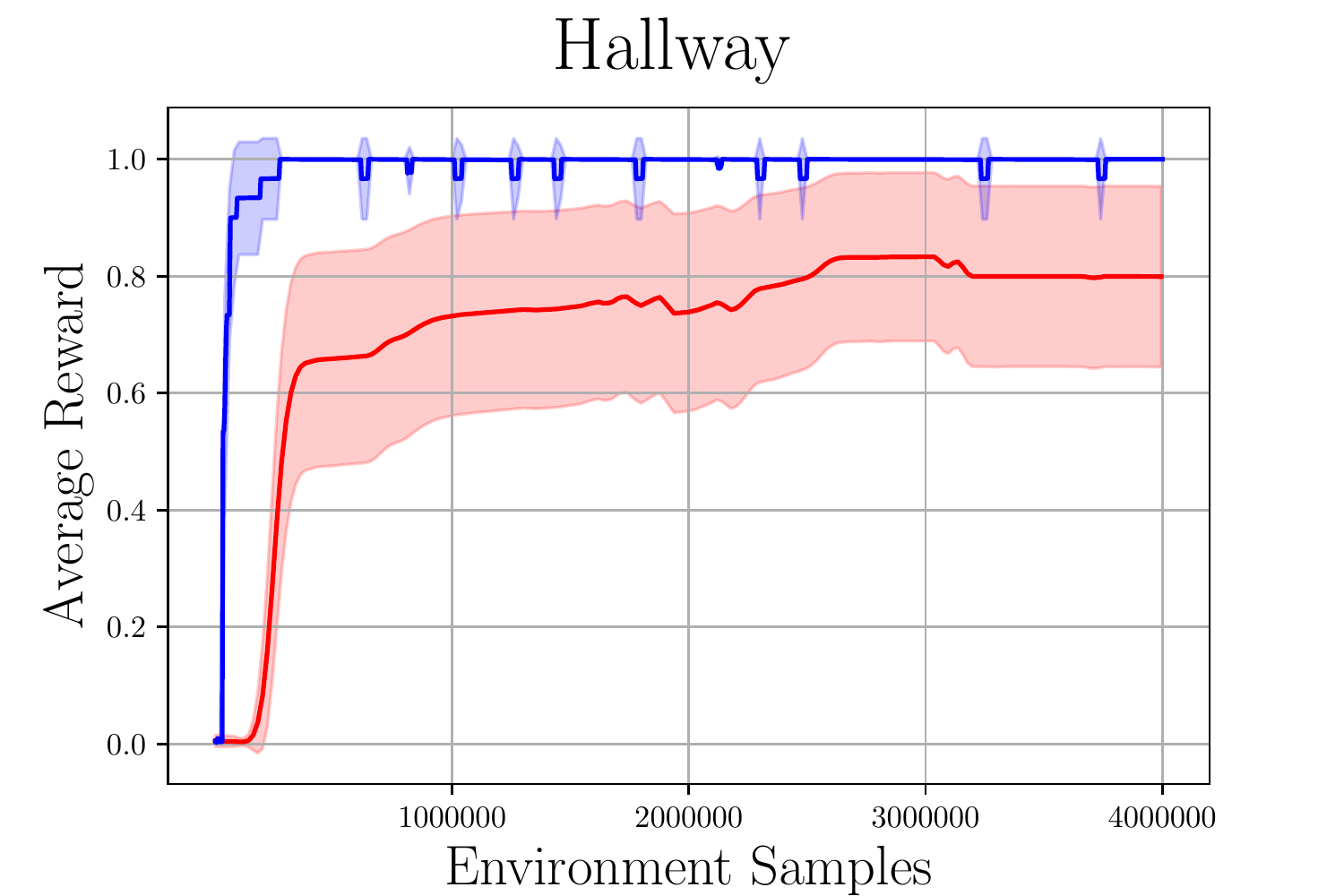} \hspace{1mm} \includegraphics[scale=0.4]{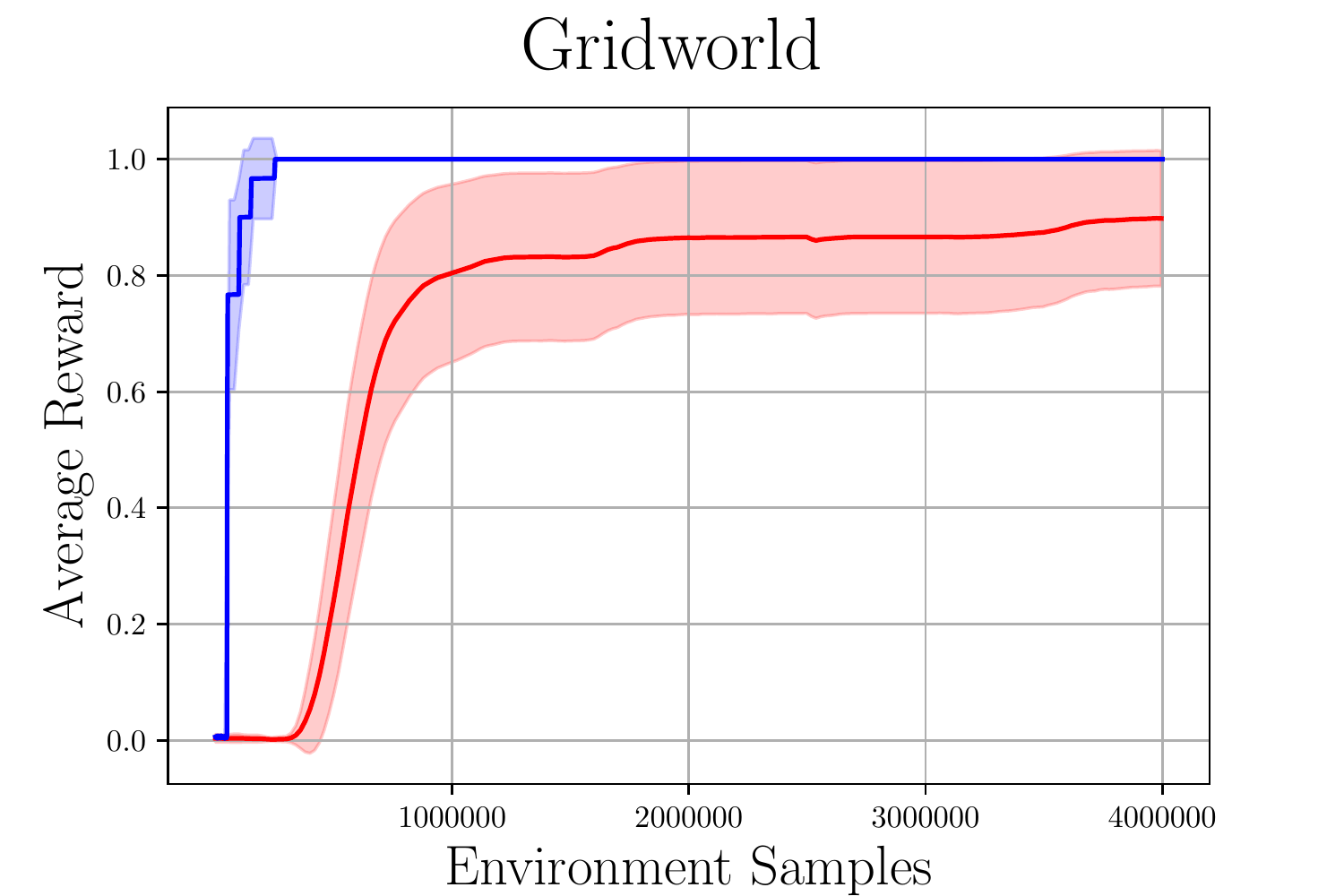} \hspace{1mm} \includegraphics[scale=0.4]{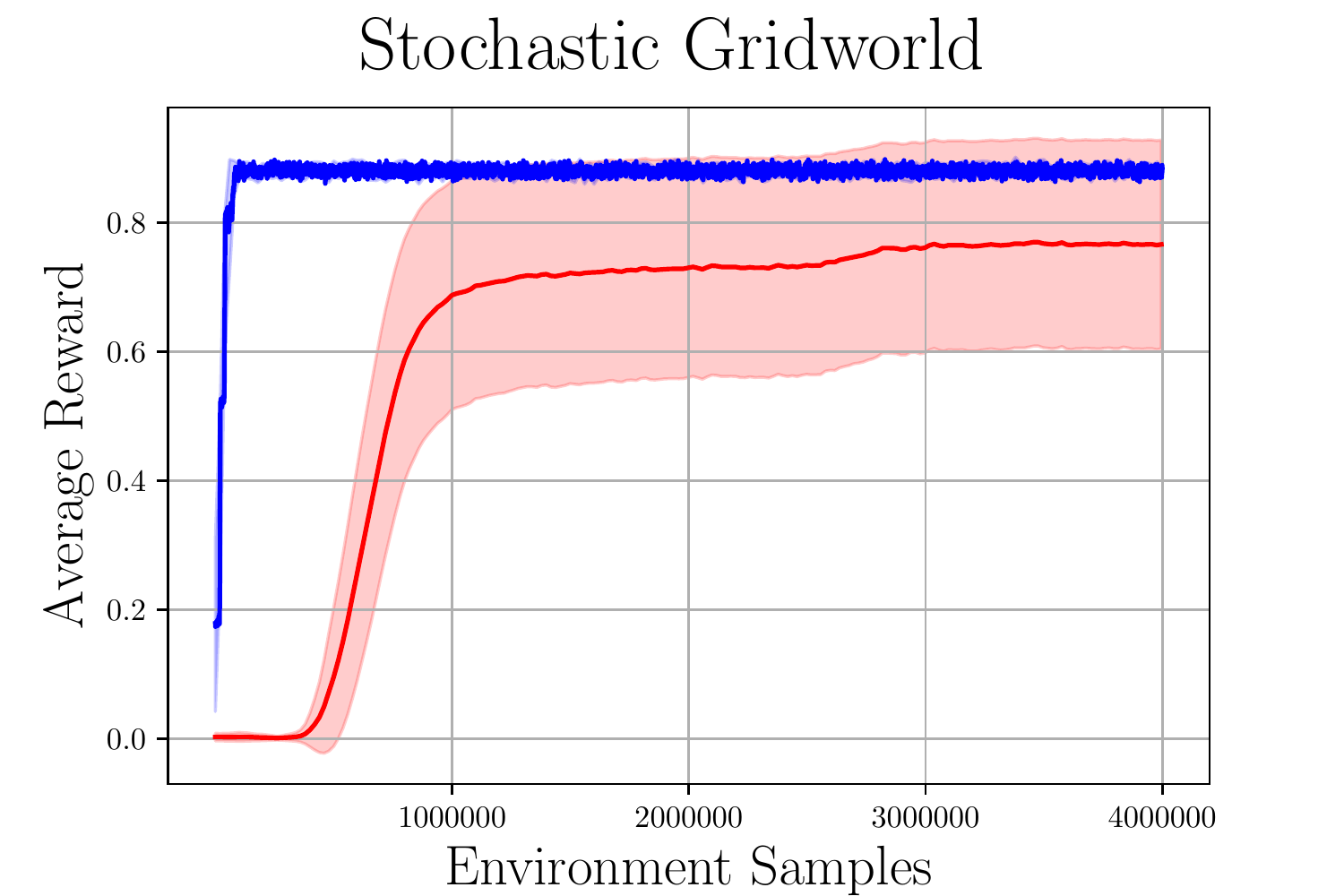} 
    {\renewcommand{\tabcolsep}{3pt}
\small\begin{tikzpicture}
\node[draw=black]  {%
	\begin{tabular}{llllllll}
	\textbf{Legend}:&
	\entry{red!}{PPO + LSTM}&
    \entry{blue!}{AutRL + Q-Learning (ours)} &
	\end{tabular}};
\end{tikzpicture}}%
    \caption{The results from the conducted experiments. The error bars are 95\% confidence intervals over 30 runs. AutRL learns a superior policy to the LSTM-based PPO in a way that is, at most, over order of magnitude more sample-efficient. Learning is far more stable across runs with AutRL: the error bars for AutRL are far tighter than those for PPO, indicating very little variation in the learning trajectory across runs. }
    \label{fig:results}
\end{figure}

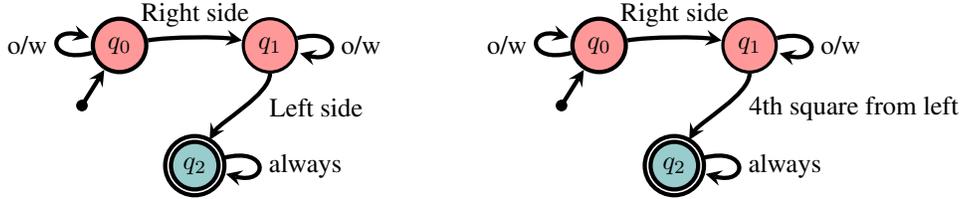
\begin{figure}[h!]
    \centering
    \vspace{-2mm}
    \begin{tikzpicture}[node distance=2cm,on grid,every initial by arrow/.style={ultra thick,->, >=stealth}, initial text={}]%
  \node[ultra thick,state,fill=red!40!white,minimum size=.5cm] (q_0) at (-1,0) {$q_0$};
  \node[ultra thick,state,accepting,fill=teal!40!white,minimum size=.5cm]  (q_2) at (0,-1.6) {$q_2$};
  \node[very thick,state,,fill=red!40!white,minimum size=.5cm]  (q_1) at (1,0) {$q_1$};

  \path[ultra thick,->, >=stealth] (-3/2,-0.8) edge node [right] {} (q_0);
  \draw[fill=black] (-3/2,-0.8) circle (0.07);
  
   \path[->] (q_0) edge [ultra thick,->, >=stealth,loop left] node { {o/w}} ();
   \path[->] (q_1) edge [ultra thick,->, >=stealth,loop right] node { {o/w}} ();
   \path[->] (q_2) edge [ultra thick,->, >=stealth,loop right] node { {always}} ();
  
  
  
  
   \draw[ultra thick,->, >=stealth, out=10, in=170, looseness=0.6] (q_0) to node [above] {Right side} (q_1);
   
   \draw[ultra thick,->, >=stealth, out=270, in=60, looseness=0.6] (q_1) to node [right=0.2] {Left side} (q_2);
  
  
  
  
  
  
  
  

\end{tikzpicture}
  \node[ultra thick,state,fill=red!40!white,minimum size=.5cm] (q_0) at (-1,0) {$q_0$};
  \node[ultra thick,state,accepting,fill=teal!40!white,minimum size=.5cm]  (q_2) at (0,-1.6) {$q_2$};
  \node[very thick,state,,fill=red!40!white,minimum size=.5cm]  (q_1) at (1,0) {$q_1$};

  \path[ultra thick,->, >=stealth] (-3/2,-0.8) edge node [right] {} (q_0);
  \draw[fill=black] (-3/2,-0.8) circle (0.07);
  
   \path[->] (q_0) edge [ultra thick,->, >=stealth,loop left] node { {o/w}} ();
   \path[->] (q_1) edge [ultra thick,->, >=stealth,loop right] node { {o/w}} ();
   \path[->] (q_2) edge [ultra thick,->, >=stealth,loop right] node { {always}} ();
  
  
  
  
   \draw[ultra thick,->, >=stealth, out=10, in=170, looseness=0.6] (q_0) to node [above] {Right side} (q_1);
   
   \draw[ultra thick,->, >=stealth, out=270, in=60, looseness=0.6] (q_1) to node [right=0.2] {4th square from left} (q_2);
  
  
  
  
  
  
  
  

\end{tikzpicture}%
    \caption{Two distinct DFAs learned from the Hallway environment. Both are enough to make the domain Markovian: however, the one on the right is not a perfect classifier of $\mathcal{G}$ for this domain. This shows that the condition in the statement of AutRL above is strictly stronger than necessary: there are automata which do not perfectly decide $\mathcal{G}$, but which discern enough relevant information to learn optimal, temporally extended behavior in non-Markovian settings. }
    \label{fig:dfa_1}
\end{figure}

\section{Concluding Remarks}

In this work, we show how to learn a KR that provides a useful symbolic abstraction in support of realizing temporally extended behavior in standard RL agents which rely on a Markov assumption. We provide an end-to-end RL algorithm which, in deterministic and stochastic domains, is demonstrably more sample-efficient than the state of the art. We further provide theoretical guarantees of optimal convergence for our approach (under conditions outlined in the theorem of Appendix A), along with insight into the structure of DFAs which are learned by this method. 

This work reveals several promising directions for future research. While we considered environments where the reward function is non-Markovian, learning in the more general setting of POMDPs is an important problem related to this work, to which the DFA methods leveraged here do not apply. Further, AutRL leverages DFAs, which can only represent regular languages, thus learning efficiently when $\mathcal{G}$ is not a regular language in $S \times A$, as well as learning KR in high-dimensional and continuous state spaces, remain major open challenges to this methodology. 

%
 

\begin{ack}
We gratefully acknowledge funding from the Natural Sciences and
Engineering Research Council of Canada (NSERC), the Canada CIFAR AI Chairs
Program, and Microsoft Research. The third author also acknowledges
funding from ANID (Becas Chile). Resources used in preparing this research
were provided, in part, by the Province of Ontario, the Government of
Canada through CIFAR, and companies sponsoring the Vector Institute for
Artificial Intelligence (\url{www.vectorinstitute.ai/partners}). Finally, we
thank the Schwartz Reisman Institute for Technology and Society for
providing a rich multi-disciplinary research environment.
\end{ack}
\bibliographystyle{unsrt}
\bibliography{main}

\section*{Appendix A: Convergence Guarantee + Proof}

\begin{theorem}
    Let $\mathcal{N}$ be an NMRDP. Under the following (realistic) conditions, \texttt{\emph{AutRL}} converges to an optimal policy in the sample limit. 
    \begin{enumerate}
        \itemsep -0.25em 
        \item \texttt{\emph{sample\_traces}} visits every reachable history within $\mathcal{N}$ i.o.a.s. (infinitely often, almost surely) \textbf{(exploration)}
        \item For any regular language $\mathcal{L}$, when sampling traces using \texttt{\emph{sample\_traces}}, that \texttt{\emph{aut\_learn}} eventually returns an automaton perfectly deciding $\mathcal{L}$ w.p.1. \textbf{(consistency of \texttt{\emph{aut\_learn}})}
        \item \texttt{\emph{markov\_learn}} converges to the optimal policy in the sample limit for an MDP \textbf{(consistency of \texttt{\emph{markov\_learn}})}
        \item $\{h : R(h) = 1\} = \mathcal{G} \subset \mathcal{H}$ forms a regular language with alphabet $\Sigma = S \times A$ \textbf{(regularity)}
    \end{enumerate}
\end{theorem}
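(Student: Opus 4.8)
The plan is to split the argument into three stages: first, that the DFA maintained by \texttt{AutRL} stabilizes almost surely, after finitely many steps, to a fixed automaton $M^{*}$ perfectly deciding $\mathcal{G}$; second, that augmenting the environment with the states of such an $M^{*}$ yields a genuine MDP whose optimal policies are optimal for $\mathcal{N}$; and third, that \texttt{markov\_learn} run on this augmented MDP converges to such a policy. The four hypotheses of the theorem feed into these stages in the obvious way: regularity and consistency of \texttt{aut\_learn} (together with exploration) drive Stage~1, exploration drives Stages~1 and~3, and consistency of \texttt{markov\_learn} drives Stage~3.

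\textbf{Stage 1 (DFA stabilization).} By regularity, $\mathcal{G}$ is a regular language over $\Sigma = S \times A$, so some finite DFA decides it perfectly. I would first show that an imperfect DFA cannot persist: if the current DFA $M$ misclassifies some reachable history $h$, then by the exploration assumption $h$ appears in some future batch of \texttt{sample\_traces} almost surely, making that batch inconsistent with $M$ and triggering a call to \texttt{aut\_learn} on the accumulated trace set. Hence, as long as the DFA is imperfect, \texttt{aut\_learn} is invoked infinitely often, and by the consistency-of-\texttt{aut\_learn} assumption it eventually returns a perfect automaton $M^{*}$ with probability $1$. Since $M^{*}$ is consistent with every trace, it is never replaced afterwards; so almost surely there is a finite (random) time after which the DFA equals the fixed automaton $M^{*}$.

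\textbf{Stage 2 (the augmented MDP).} Let $Q$ and $\delta$ be the states and transition function of $M^{*}$, and write $\delta^{*}(h)$ for the state reached after reading $h$. Consider the process on $S \times Q$ whose transition from $(s,q)$ under action $a$ moves to $(s', \delta(q,(s,a)))$ with probability $P(s,a,s')$, with reward $\widehat{R}(s,q) = 1$ exactly when $q$ is accepting. This is an MDP: the next augmented state is a function of $(s,q,a)$ only (stochastic in the $S$-part, deterministic in the $Q$-part), and $\widehat{R}$ depends only on the current augmented state. Because $M^{*}$ decides $\mathcal{G}$ perfectly, for every history $h$ we have $R(h) = \widehat{R}(s_h, \delta^{*}(h))$, where $s_h$ is the final $S$-component of $h$; so the augmented reward reproduces the true non-Markovian reward along every trajectory. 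The crux is the optimality-transfer claim: the optimal continuation value from a history $h$ depends on $h$ only through $(s_h, \delta^{*}(h))$, since every future reward is determined by $\delta^{*}(h)$ together with the subsequent $(s,a)$-sequence, whose distribution depends only on $s_h$. Consequently no history-dependent policy outperforms the best policy depending only on the augmented state, and an optimal policy of the $S \times Q$ MDP is optimal for $\mathcal{N}$.

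\textbf{Stage 3 (convergence), and the main obstacle.} After the stabilization time of Stage~1, every remaining epoch runs \texttt{markov\_learn} on the fixed $S \times Q$ MDP, with the exploration assumption guaranteeing every augmented state--action pair is visited infinitely often; the finitely many earlier epochs with imperfect DFAs only alter the initialization supplied to \texttt{markov\_learn}, which for $Q$-learning does not change asymptotic behavior. By the consistency-of-\texttt{markov\_learn} assumption, \texttt{markov\_learn} then converges to an optimal policy of this MDP, which by Stage~2 is an optimal policy of $\mathcal{N}$. I expect Stage~2 to be the main obstacle: carefully formalizing the optimality-transfer lemma, handling the modest bookkeeping needed to align a DFA over the alphabet $S \times A$ with histories that terminate in a state (an off-by-one between $\delta^{*}(h)$ and $R(h)$), and verifying that restricting attention to policies over $S \times Q$ is without loss. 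A lesser subtlety, in Stage~1, is that detecting an imperfect DFA as inconsistent relies on the growing accumulated-trace buffer together with the i.o.\ exploration guarantee, not on any single sampled batch.
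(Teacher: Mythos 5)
Your proposal follows essentially the same route as the paper's proof: the \textbf{dfa} variable almost surely stabilizes to an automaton perfectly deciding $\mathcal{G}$ (via regularity, exploration, and consistency of \texttt{aut\_learn}), augmenting $S$ with its states turns $\mathcal{N}$ into an MDP whose optimal policies transfer back, and \texttt{markov\_learn} then converges on that fixed MDP. Your write-up is more detailed than the paper's one-paragraph argument---notably the explicit optimality-transfer step and the remark that pre-stabilization epochs only affect initialization---but it is the same decomposition, not a different approach.
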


\begin{proof}
    Fix an NMRDP $\mathcal{N} = (S, A, P, R, \gamma)$, and assume all conditions are met in the theorem statement. Then, since $\mathcal{G}$ is a regular language, we have that the variable \textbf{dfa} is, with probability 1, eventually set to an automaton, with state space $Q$, which perfectly decides $\mathcal{G}$. At such a point, then we have that the states of the automaton perfectly predict the reward $R$: thus, the problem $\mathcal{M} = (S \times Q, A, P, R, \gamma)$, simply defined as $\mathcal{N}$ but with a state space augmented with the automaton states of \textbf{dfa}, is actually an MDP. Further, \textbf{dfa} is never thereafter reset, since no inconsistent trace can possibly be sampled from the environment subsequently. Thus, $\mathcal{M}$ is simply an MDP being trained with Markovian learning algorithm markov\_learn, and thus the appropriate convergence conditions apply.
\end{proof}

\section*{Appendix B: Baseline Hyperparameter Tuning}

For the tabular $Q$-learning for our approach (AutRL), we used a learning rate of $0.1$ on the deterministic environment, $0.001$ on the stochastic environment. Further, an exploration parameter of $0.01$ was used on the deterministic environments, and $0.05$ with a decaying exponential schedule (with factor $0.99$) was used for the stochastic environments. The automaton learning method was set at a maximum state threshold of $5$ for all environments but the multi-armed bandit (which had $14$), a loop penalty of $0.01$ in all environments, and a transition penalty of $0.01$ for the Multi-Armed Bandit, $0.3$ for both gridworlds, and $0.6$ for the hallway. Further, the timeout was set to $250$. 

For the Recurrent-PPO baseline, we used the OpenAI Baselines \cite{baselines} implementation with an LSTM policy network, an entropy coefficient of $0.001$, a learning rate of $0.0003$, a discounting factor of $0.99$, a batch size of $16384$ (with $8$ minibatches per update), and all other hyperparameters set to default. We followed standard practice in tuning these hyperparameters and found that these settings consistently performed best across our domains.

\end{document}